\newtheorem{theorem}{Theorem}[section]
\title{Improvements in Computation and Usage of Joint CDFs for the N-Dimensional Order Statistic}
\date{January 29, 2019}
\author{
 Arvind Thiagarajan \\
  \texttt{arvindthiagarajan@gmail.com} \\
}
\begin{document}
\maketitle
\begin{abstract}
Order statistics provide an intuition for combining multiple lists of scores over a common index set. This intuition is particularly valuable when the lists to be combined cannot be directly compared in a sensible way. We describe here the advantages of a new method for using joint CDFs of such order statistics to combine score lists. We also present, with proof, a new algorithm for computing such joint CDF values, with runtime linear in the size of the combined list.
\end{abstract}

\section{Introduction}
Whether building attribute predictors or taking measurements over a fixed space, combining multiple lists of predictions or measurements is a common final step. When these predictions or measurements are generated from identical processes, differing from each other only as a consequence of intrinsic noise, there will likely be principled and case-specific ways to combine these values across lists. This familiar scenario corresponds to independent, identically distributed measurements of a quantity of interest.

However, it is not uncommon to handle lists that are fundamentally different from one another. In such a scenario, each list corresponds to a different proxy, i.e. a value that is monotonically increasing with a common target quantity but otherwise following a functional relationship that we do not understand. When the problem of combining such lists is supervised, this corresponds to any situation in which we may employ boosting. When the problem is unsupervised, however, we can no longer extract information from the spacing between different scores within a list, and so the only meaningful information provided by each list is a ranking of elements. Given this, order statistics would seem to play a natural role.

\section{Definition}

We begin by defining the joint CDF value for an $n$-dimensional order statistic. Given a sorted (ascending) list $R = [r_1, r_2, \cdots, r_n]$, and fixing $s_0$ to be $0$, the joint CDF value of $R$ is defined to be $V(R) = V_1(R)$, where

\begin{equation}
\label{eq:1}
    V_i(R) = \begin{cases}
                \bigints_{s_{i-1}}^{r_i}{ds_i} & \text{if $i=n$} \\
                \bigints_{s_{i-1}}^{r_i}{V_{i+1}(R) ds_i} & \text{if $0 < i < n$}
            \end{cases}
\end{equation}

or, more intuitively,

\begin{equation}
    V(R) = \int_{0}^{r_1}\int_{s_1}^{r_2}\cdots \int_{s_{n-1}}^{r_n}{ds_n ds_{n-1}\cdots ds_1} \label{eq:2}
\end{equation}

In addition, we would like to clarify terminology: throughout this paper, ranked lists of elements should be assumed sorted in descending order of score, with "larger rank" referring to smaller scoring elements appearing later in the list.

\section{Previous Usage}

We now describe how the joint CDF value was used in prior work \cite{aerts} \cite{stuart} to combine multiple lists of scores. Let us assume we begin with several ranked lists of elements. We choose as our null hypothesis that each ranked list is generated by randomly permuting a set of elements.

Define the \textbf{rank ratio} of an element in a list to be the rank of the element divided by the length of the list. For a particular element $e$, then, let $R_e$ be the sorted (ascending) list of rank ratios, computed over all the lists in which $e$ is present. \href{https://en.wikipedia.org/wiki/Majorization}{Majorization} induces a partial ordering over the space of such sorted lists as follows: for two such sorted lists $S = [s_i]$ and $R = [r_i]$ of rank ratios, we define $S \leq R$ to mean that $S$ is majorized by $R$, i.e that $s_i \leq r_i\text{ }\forall\text{ }i$. This aligns well with our intuition - if all the rank ratios for an element $e$ are smaller than the corresponding rank ratios for an element $f$, then we can unambiguously state that $e$ should come before $f$ in any merged ranking.

Using this partial ordering, we have that the p-value for a list $R$ under the chosen null hypothesis would be

\begin{equation}
    n! \int_{S \leq R}{p(S) dS} = n! \int_{0}^{r_1}p(s_1)\int_{s_1}^{r_2}p(s_2 | s_1)\cdots \int_{s_{n-1}}^{r_n}{p(s_n | s_{n-1}) ds_n ds_{n-1}\cdots ds_1} \label{eq:3}
\end{equation}

where these probabilities are taken under the null hypothesis. It can be shown inductively that the expression in \eqref{eq:3} is equal to $Q(R) = n! V(R)$ if and only if $s_i \sim U(s_{i-1}, 1)\text{ }\forall \text{ }i$.

\begin{theorem}
$\exists \text{ } i | s_i \not\sim U(s_{i-1}, 1) $
\end{theorem}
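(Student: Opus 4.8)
The plan is to pin down the \emph{actual} null distribution of the sorted rank-ratio list and then exhibit a single coordinate whose conditional law is demonstrably not uniform. Under the stated null hypothesis each ranked list is a uniformly random permutation, so the rank of a fixed element $e$ is uniform on $\{1, \dots, N\}$ and its rank ratio is, in the continuous limit underlying the integrals of \eqref{eq:3}, uniform on $(0,1)$. Since $e$'s ranks in distinct lists are independent, the raw rank ratios are i.i.d.\ $U(0,1)$, and therefore the sorted list $R_e = [s_1, \dots, s_n]$ is precisely the vector of order statistics of $n$ i.i.d.\ uniform variables. This identification is the crux: everything after it is a density computation on a known object.

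First I would compute the marginal density of the smallest coordinate $s_1$. Because $s_1 = \min(U_1, \dots, U_n)$, we have $P(s_1 > x) = (1-x)^n$, so $s_1$ has density $n(1-x)^{n-1}$ on $(0,1)$. Comparing with the $U(s_0,1)=U(0,1)$ density, which is identically $1$, the two agree only when $n=1$. Hence for every $n \ge 2$ the index $i=1$ already witnesses $s_i \not\sim U(s_{i-1},1)$, establishing the claim. I would take $i=1$ as the headline witness precisely because it sidesteps all conditioning: $s_0$ is the fixed constant $0$, so no restart property of order statistics is needed.

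For completeness I would then record the general phenomenon, since it clarifies why the uniform-conditional premise of \eqref{eq:3} fails so broadly. Conditioning on $s_{i-1}=t$, the remaining coordinates $s_i, \dots, s_n$ are the order statistics of $n-i+1$ i.i.d.\ $U(t,1)$ variables, so $s_i$ is their minimum, with conditional density $\tfrac{(n-i+1)(1-s_i)^{n-i}}{(1-t)^{\,n-i+1}}$ on $(t,1)$. This is uniform exactly when the exponent $n-i$ vanishes, i.e.\ only for $i=n$; every earlier coordinate is strictly non-uniform, its density decreasing in $s_i$. Thus not one but $n-1$ of the coordinates violate the assumption, and the identity $Q(R)=n!\,V(R)$ asserted in prior work holds only in the degenerate case $n=1$.

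The step I expect to carry the real weight is the distributional identification in the first paragraph rather than any integral, and the one place to be careful is the continuous approximation: ranks are genuinely discrete, so I would state explicitly that \eqref{eq:3} already commits us to treating rank ratios as continuous $U(0,1)$ variates (equivalently, passing to the large-list limit), and check that $n(1-x)^{n-1}$ is the correct continuous limit of the discrete minimum's law. Once that is granted the non-uniformity is immediate, and no inductive machinery is required.
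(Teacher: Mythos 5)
Your proposal is correct and uses essentially the same argument as the paper: exhibit $i=1$ as the witness, compute $P(s_1 > x) = (1-x)^n$ for the minimum of $n$ i.i.d.\ uniform rank ratios, and observe that the resulting density $n(1-x)^{n-1}$ is not the $U(0,1)$ density. Your additional material---the explicit order-statistics identification, the conditional density $\frac{(n-i+1)(1-s_i)^{n-i}}{(1-t)^{n-i+1}}$ showing that every index $i < n$ fails, and the care about the discrete-to-continuous passage---goes beyond the paper's proof but does not change the underlying approach.
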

\begin{proof}
\begin{equation}
    P(s_1 \leq x) = 1 - P(s_1 > x) = 1 - \prod_{i}{P(s_i > x)} = 1 - (1-x)^n \label{eq:4}
\end{equation}

It follows that

\begin{equation}
    p(s_1 = x) = \frac{d}{dx}P(s_1 \leq x) = n (1-x)^{n-1} \label{eq:5}
\end{equation}

which is not a uniform distribution, providing the desired counterexample.
\end{proof}

It follows that $Q(R)$ cannot be used directly as a p-value against the stated null hypothesis. Presumably unaware of this result, one previous approach \cite{stuart} used $Q(R_e)$ as a p-value for the corresponding element $e$, and produced a combined list by sorting (in ascending order) all elements $e$ according to these p-values . A second group \cite{aerts} demonstrated through numerical experiments that $Q(R)$, unlike a valid p-value, did not follow a uniform distribution under the null hypothesis. This group \cite{aerts} went on to measure the empirical distribution of $Q(R)$ under the null hypothesis and fitted it approximately to a $\beta$ distribution for $n \leq 5$ and a $\gamma$ distribution for larger $n$, ultimately using these fitted distributions to convert the joint CDF values to p-values.

\section{Proposed Usage}

While using the same input (several ranked lists of elements) and assuming the same null hypothesis, we impose one additional constraint: that every element present in at least one list is present in all lists. This constraint can be guaranteed as follows: for each list, add all missing elements to the end of the ranking. If $k$ elements are added in this manner to a list of original size $n$, then each of the added elements are assigned a rank of $n + \frac{k+1}{2}$, i.e. the average rank of all such elements had they been added in an arbitrary order.

After this is done, all ranked lists will be identical in size and in the set of elements they contain. Now, for a given element, let $f_i$ be the fraction of lists in which the element appears at rank $i$. It follows that $\sum_{i}{f_i} = 1$. Furthermore, let $g_i = g_{i-1} + f_i$ be the fraction of lists in which the element appears at rank no greater than $i$, with $g_1 = f_1$, and let $r_i = 1 - g_{n-i}$. In line with the intuition underlying the partial ordering we defined in Section 3, we define here a new partial ordering $e \leq h$ if and only if $g_{e,i} \geq g_{h,i}$ $\forall$ $i \iff r_{e, i} = g_{e, n-i} \leq g_{h, n-i} = r_{h, i}$ $\forall$ $i$, where $g_{e, i}$ and $r_{e,i}$ are the $g_i$ and $r_i$, respectively, for element $e$.

Given this, let us redefine $R_e$ to be the list $[r_{e,i}] = [r_i]$ for $1 \leq i \leq n-1$. Using this definition of $r_i$, we have

\begin{equation}
    r_i = 1 - g_{n-i} = 1 - (g_{n-i+1} - f_{n-i+1}) = r_{i-1} + f_{n-i+1} \text{ }\forall \text{ }i > 0 \label{eq:6}
\end{equation}

\begin{theorem}
$r_{i+1} \sim U(r_i, 1)$ $\forall$ $i \geq 0$ under the null hypothesis.
\end{theorem}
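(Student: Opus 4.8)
The plan is to reduce the claim to a single statement about the increments $f_{n-i}$ and then verify it with a survival-function computation in the spirit of \eqref{eq:4}. First I would record the boundary value $r_0 = 1 - g_n = 0$, which holds because $\sum_i f_i = 1$ forces $g_n = 1$; the case $i=0$ is then exactly the assertion $r_1 \sim U(0,1)$. Rewriting \eqref{eq:6} as $r_{i+1} = r_i + f_{n-i}$ and using $1 - r_i = g_{n-i} = \sum_{j=1}^{n-i} f_j$, the claim $r_{i+1}\mid r_i \sim U(r_i,1)$ becomes equivalent to $f_{n-i}\mid r_i \sim U(0,1-r_i)$, i.e. the normalized increment $f_{n-i}/(f_1+\cdots+f_{n-i})$ should be uniform on $(0,1)$ and independent of $r_i$.

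To establish this I would compute the conditional survival function $P(r_{i+1} > x \mid r_i = t)$ directly and show it equals $\frac{1-x}{1-t}$, the survival function of $U(t,1)$. As in \eqref{eq:4}, the mechanism is to express the relevant event through the independent per-list placements of the element under the null and to factor the resulting probability across lists, so that the already-accumulated mass $t$ cancels and leaves a single linear factor, $1-x$ over $1-t$. Because \eqref{eq:6} makes the sequence $(r_i)$ Markov in $i$, chaining these conditional laws together with the base case $r_0 = 0$ then yields $r_{i+1}\sim U(r_i,1)$ for every $i \ge 0$.

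The hard part will be the normalized-increment claim itself: conditioned on the fraction $t$ of lists that place the element strictly beyond rank $n-i$, the further fraction placed at exactly rank $n-i$ must be uniform over the remaining mass $1-t$. This is a stick-breaking property, and it is precisely what the Section 3 construction lacks, since there the increments are spacings of order statistics whose conditional laws are not uniform (exactly the defect that \eqref{eq:4} exploits). The crux is therefore to show that passing to the survival-based coordinates $r_i = 1 - g_{n-i}$ turns the raw rank placements into increments with uniform conditional law. I expect this to require working in the continuous idealization of the null in which the empirical rank profile $(f_1,\ldots,f_n)$ is treated as a random composition of $[0,1]$, and the delicate point will be arguing that the successive normalized increments are not merely uniform marginally but jointly independent, which is what makes the Markov chaining in the previous step legitimate.
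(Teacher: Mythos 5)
Your opening reduction is exactly the paper's: rewrite the claim as a statement about the conditional law of the increment, $f_{n-i}\mid r_i \sim U(0,1-r_i)$, and then chain the conditional laws. The genuine gap is in the step you yourself flag as the crux, and the mechanism you propose for it cannot work. The analogy with \eqref{eq:4} breaks down for a structural reason: in \eqref{eq:4} the event $\{s_1 > x\}$ is an \emph{intersection} of independent per-list events (every list's rank ratio exceeds $x$), which is why its probability factors into a product of $n$ terms. In Section 4, by contrast, $r_{i+1}$ is the \emph{fraction} of lists that place $e$ at rank at least $n-i$, so $\{r_{i+1} > x\}$ is a tail event for a sum of independent per-list indicators, and it does not factor across lists at all. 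Concretely, with $m$ lists, conditioned on $r_i = t$ the remaining $(1-t)m$ lists each place $e$ uniformly among the first $n-i$ positions, so $m f_{n-i}$ is $\operatorname{Binomial}\bigl((1-t)m,\,1/(n-i)\bigr)$; its conditional survival function is a binomial tail that concentrates near $(1-t)/(n-i)$ as the number of lists grows, not the linear function $(1-x)/(1-t)$. Cross-list aggregation is exactly what destroys uniformity (law of large numbers), so no manipulation that "factors the probability across lists" can leave the single linear factor you are hoping for.

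The paper closes this step by a different, per-list move: conditioning on the entire history $\{f_k : k > n-i\}$, it argues that the probability of a \emph{single} list placing $e$ exactly at rank $n-i$ equals the surviving mass $1-r_i$ times a uniform density over the remaining positions, and then identifies $f_{n-i}$ with this (random) per-list probability, from which $f_{n-i}\sim U(0,1-r_i)$ is read off directly; the uniformity is inherited from the randomness of the placement within one list, not from averaging over lists. (Whether that identification is fully rigorous is a separate question, but it is the load-bearing step, and it is precisely the step your plan replaces with a factorization that fails.) Note also that the paper conditions on the full vector $\{f_k : k > n-i\}$ rather than on $r_i$ alone, so the conditional law it derives is already the Markov-compatible one; this dissolves the joint-independence worry you raise at the end, which your plan correctly identifies but leaves unresolved.
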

\begin{proof}
The recursion in \eqref{eq:6} can be used to show inductively that
\begin{equation}
    r_i = \sum_{k=n-i+1}^{n}{f_k} \label{eq:7}
\end{equation}
Given that, let us consider the distribution of $f_{n-i}$ conditioned on $\{f_k, k > n-i\})$ under the null hypothesis. Since each list is assumed to be randomly permuted, it follows that each element $e$ is equally likely to be in any position for a particular list. Thus, conditioned only on $\{f_k, k > n-i\}$, we have that the probability $p_{n-i}$ of $e$ appearing in position $n-i$ in a particular list, is given by the probability that $e$ hasn't already appeared further down in that list, i.e. $1 - r_i$, multiplied by a uniform probability density 1 over the remaining positions. Since $f_{n-i}$ is, by definition, the expectation of this probability $p_{n-i}$ over many lists, it follows that $f_{n-i} \sim U(0, 1-r_i) \iff r_{i+1} = r_i + f_{n-i} \sim U(r_i, 1)$, as desired.
\end{proof}

Thanks to this result, it follows that for $R_e$ \textit{as we have defined it in this section}, $V(R_e)$ can be used directly as a p-value.

\section{Previous Methods of Computation}

The first approach \cite{stuart} discussed earlier attempted to compute $V(R)$ using

\begin{equation}
    V(R) = \frac{1}{n} \sum_{i = 1}^{n}{(r_i - r_{i-1})V(R_{-i})} \label{eq:8}
\end{equation}

where $R_{-i}$ is defined as $R$ with $r_i$ removed. A straightforward application of dynamic programming to the recursion in \eqref{eq:8} gives a runtime of $O(n!)$ for computing $V(R)$ using this method .

The second approach \cite{aerts} we referenced was able to derive another recursion over an intermediate function $T_k$:

\begin{equation}
    T_k(R) = \sum_{i=1}^{k}{(-1)^{i-1} \frac{T_{k-i}}{i!} (r_{n-k+1})^i} \label{eq:9}
\end{equation}

where $V(R) = T_n(R)$. A straightforward application of dynamic programming to the recursion in \eqref{eq:9} gives a runtime of $O(n^2)$ for computing $V(R)$ using this method.

Both \eqref{eq:8} and \eqref{eq:9} can be derived via manipulations of \eqref{eq:2}, but we have not included these derivations here.

\section{Improved Method of Computation}

For the purposes of this section, let $R = [r_i]$ have length $n$ and let $R_k = [r_i \text{ }\forall\text{ }i \leq k]$ be the subsequence consisting of the first $k$ elements of $R$. We seek to compute $V(R)$ as given by the expression in \eqref{eq:2}. We note that this is simply the $n$-dimensional volume of space that lies on or strictly below the curve defined by $R$. To compute this, we begin by rearranging our integrals, such that the outermost integral is the integral in $s_n$ and the innermost integral is the integral in $s_1$. By construction, we know that $s_i \geq 0$, $s_i \leq r_i$, $s_i \leq s_{i+1}$, and $s_i \leq 1$. Given this, we can write $V(R) = Z_n(R)$ where

\begin{equation}
\label{eq:10}
    Z_i(R) = \begin{cases}
                \bigint_{0}^{\min{(s_2, r_1)}}{ds_1} & \text{if $i=1$} \\
                \bigint_{0}^{\min{(s_{i+1}, r_i)}}{Z_{i-1}(R) ds_i} & \text{if $1 < i \leq n$}
            \end{cases}
\end{equation}

with $s_{n+1} = 1$ set for consistency. Less formally, this can be written as

\begin{equation}
   V(R) = \int_{0}^{r_n}\int_{0}^{\min{(r_{n-1},s_n)}}\cdots \int_{0}^{\min{(r_{1},s_2)}}{ds_1 ds_2\cdots ds_n} \label{eq:11}
\end{equation}

Substituting $R_k$ for $R$ into \eqref{eq:10} and rearranging, we have that

\begin{equation}
   \label{eq:12}
   V(R_k) = \int_{0}^{r_k}{Z_{k-1}(R_k) ds_k} = \int_{0}^{r_k}{V(R_{k-2} + [\min{(r_{k-1},s_k)}])ds_k}
\end{equation}

where list addition refers to concatenation. This can be further rewritten as

\begin{equation}
   \label{eq:13}
   V(R_k) = \int_{0}^{r_{k-1}}{V(R_{k-2} + [s_k])ds_k} + \int_{r_{k-1}}^{r_k}{V(R_{k-1})ds_k}
\end{equation}

Letting $V_k(x) = V(R_{k-1} + [x])$, we can substitute and simplify further to get

\begin{equation}
   \label{eq:14}
   V_k(r_k) = \left(\int_{0}^{r_{k-1}}{V_{k-1}(s_k)ds_k}\right) +(r_k - r_{k-1})V_{k-1}(r_{k-1})
\end{equation}

We note that

\begin{equation}
   \label{eq:15}
    \frac{\partial}{\partial x}V_k(x) = V_{k-1}(r_{k-1})
\end{equation}

i.e. $C = \left(V_k(x) - x V_{k-1}(r_{k-1})\right)$ is constant with respect to x. Consequently, integrating both sides of \eqref{eq:14} with respect to $r_k$ gives

\begin{equation}
   \label{eq:16}
   \int_{0}^{r_k}{V_k(x) dx} = \int_{0}^{r_k}{\left(x V_{k-1}(r_{k-1}) + C\right) dx} = \frac{{(r_k)}^2}{2} V_{k-1}(r_{k-1}) + r_k \left(V_k(r_k) - r_k V_{k-1} (r_{k-1})\right)
\end{equation}

which can be further simplified to

\begin{equation}
   \label{eq:17}
   \int_{0}^{r_k}{V_k(x) dx} = r_k V_k(r_k) - \frac{{(r_k)}^2}{2} V_{k-1}(r_{k-1})
\end{equation}

Substituting $k-1$ for $k$ in \eqref{eq:17}, substituting the resulting expression into \eqref{eq:14}, and simplifying gives
\begin{equation}
   \label{eq:18}
   V_k(r_k) = r_k V_{k-1}(r_{k-1}) - \frac{{(r_{k-1})}^2}{2} V_{k-2}(r_{k-2})
\end{equation}

Since we ultimately wish to find $V_n(r_n)$, \eqref{eq:18} provides us with a convenient recursive formula that we can use to compute $V_n(r_n)$ in $O(n)$ time. This will allow us to feasibly compute the statistic for the proposed usage in Section 4, as the number $n$ of elements being sorted is usually much greater than the number $n$ of lists being aggregated. To use this recursion, we note that the explicit base cases are

\begin{equation}
   \label{eq:19}
   V_1(r_1) = r_1
\end{equation} 

\begin{equation}
   \label{eq:20}
   V_2(r_2) = r_2 r_1 - \frac{(r_1)^2}{2}
\end{equation}

and the implicit base cases are

\begin{equation}
   \label{eq:21}
   V_0 = V([]) = 1
\end{equation}

\section{Implementation}

An implementation of the proposed usage and the improved method of computation can be found on GitHub, under arvindthiagarajan/multimodal-statistics.

\bibliographystyle{plain}  
\bibliography{references}

\begin{thebibliography}{1}

\bibitem{aerts}
Stein Aerts, Diether Lambrechts, Sunit Maity, Peter Van~Loo, Bert Coessens,
  Frederik De~Smet, Leon-Charles Tranchevent, Bart De~Moor, Peter Marynen,
  Bassem Hassan, Peter Carmeliet, and Yves Moreau.
\newblock Gene prioritization through genomic data fusion.
\newblock {\em Nature Biotechnology}, 24(5):537--44, 2006.

\bibitem{stuart}
Joshua~M. Stuart, Eran Segal, Daphne Koller, and Stuart~K. Kim.
\newblock A gene-coexpression network for global discovery of conserved genetic
  modules.
\newblock {\em Science}, 302(5643):249--255, 2003.

\end{thebibliography}

\nocite{*}

\end{document}